\documentclass[10pt,twocolumn,letterpaper]{article}
\usepackage{cvpr}
\usepackage{amsthm}
\usepackage{multirow}

\newcommand{\bd}{\boldsymbol}

\newtheorem{theorem}{Theorem}
\theoremstyle{definition}

\definecolor{cvprblue}{rgb}{0.21,0.49,0.74}
\usepackage[breaklinks,colorlinks,allcolors=cvprblue]{hyperref}

\title{Which Tokens Matter? Adaptive Token Selection for RLVR with the Relative Surprisal Index}

\author{
\textbf{Outongyi Lv}\textsuperscript{1,2,3*}\quad
\textbf{Yanzhao Zheng}\textsuperscript{2*}\quad
\textbf{Yuanwei Zhang}\textsuperscript{3*}\quad
\textbf{Zhenghao Huang}\textsuperscript{1,2*}\\
\textbf{Xingjun Wang}\textsuperscript{1,2}\quad
\textbf{Baohua Dong}\textsuperscript{2}\quad
\textbf{Hangcheng Zhu}\textsuperscript{2}\quad
\textbf{Yingda Chen}\textsuperscript{1,2 $\dag$ }\\
\textsuperscript{1}ModelScope Team, \textsuperscript{2}Alibaba Group, Hangzhou, China. \\
\textsuperscript{3}Shanghai Jiao Tong University, Shanghai, China. \\
}

\begin{document}
\maketitle
\begingroup
\renewcommand{\thefootnote}{\fnsymbol{footnote}}
\footnotetext[1]{These authors contributed equally to this work.}
\footnotetext[2]{Corresponding author.}
\endgroup

\begin{abstract}
Reinforcement learning (RL) has become a powerful tool for propelling Large Language Models (LLMs) beyond imitation-based training towards more robust reasoning capabilities. Among existing approaches, RL with Verifiable Rewards (RLVR) has emerged as a pivotal paradigm for advancing LLM reasoning. Despite its empirical success, recent studies have offered different insights. One line of inquiry advocates prioritizing high-entropy token positions during training, while another perspective cautions against allowing low-probability tokens to dominate gradient updates. Notably, although high-entropy tokens are usually correlated with low probability, both paradigms empirically yield substantial performance gains. In this work, we argue that evaluating sampled-token probability or entropy in isolation is insufficient to capture the policy optimization dynamics. To resolve this tension, we introduce the Relative Surprisal Index (RSI), a principled, information-theoretic metric that naturally couples the token's entropy with the probability of the selected token. We show that, under mild conditions, RSI is related to the local ratio between the first-order variations of the logit-gradient norm and predictive entropy under a selected-logit perturbation. Building on RSI, we propose RSI Selection (RSI-S), an entropy-adaptive token filtering method that retains tokens within a stable RSI interval. RSI-S successfully reconciles previous contradictory paradigms and filters out both redundant low-surprisal tokens and unstable high-surprisal tail tokens. Empirical evaluations show that RSI-S achieves higher avg@32 accuracy across different model scales (Qwen2.5-1.5B, 3B, and 7B) on AIME and AMC benchmarks: RSI-S improves avg@32 accuracy by 2--3 percentage points over GRPO. Overall, RSI offers a promising perspective for RLVR improvement.
\end{abstract}

\section{Introduction}
Large Language Models (LLMs) have achieved remarkable breakthroughs across diverse domains, including conversational systems \citep{achiam2023gpt, ouyang2022training, zeng2026glm}, generative modeling \citep{bie2026llada2, radford2018improving, team2026qwen3}, and mathematical reasoning \citep{yang2024qwen2,shao2025deepseekmath,wangself}. Within the post-training pipeline, Reinforcement Learning (RL) has emerged as a pivotal paradigm for overcoming the performance bottlenecks inherent in Supervised Fine-Tuning (SFT) \citep{chusft}. In particular, RL with Verifiable Rewards (RLVR) has gained substantial traction due to its practical utility and computational efficiency. A prominent baseline of this direction is Group Relative Policy Optimization (GRPO) \citep{shao2024deepseekmath}, a method introduced by DeepSeek that dispenses with the traditional value function by estimating advantages via rule-based rewards within each group. Building upon GRPO, several variants such as Dr. GRPO \citep{liuunderstanding}, Dynamic sAmpling Policy Optimization (DAPO) \citep{yu2026dapo}, and Group Sequence Policy Optimization (GSPO) \citep{zheng2025group} have been proposed to address its inherent limitations.

However, the above variants remain agnostic to the varying importance of individual tokens: they assign the same contribution to all tokens within a sequence. Recent work has begun to investigate token-level heterogeneity in RLVR, and existing methods can be broadly grouped into two categories. One line of work identifies subsets of influential tokens that affect the optimization dynamics \citep{wang2026beyond,lvgmts,mengsparse}. Another line of work adaptively adjusts token-level weights during training, effectively reweighting the contribution of different tokens \citep{ma2026fipo,yao2026,yang2025not}. Together, these studies highlight the importance of token-level signals in RLVR.

In this work, we study which tokens are beneficial for RLVR optimization. Our motivation comes from an apparent tension between two recent findings: on the one hand, \citet{wang2026beyond} advocate training exclusively on high-entropy token positions. Empirically, we find high predictive entropy tends to correlate with low probability assigned to the sampled token, although entropy and sampled-token probability are not equivalent. On the other hand, \citet{yang2025not} argue that low-probability tokens should not dominate LLM updates, showing theoretically that they can induce unstable gradient magnitudes. Despite this apparent tension, both approaches report empirical improvements on RLVR-based reasoning tasks. These observations suggest that neither sampled-token probability nor predictive entropy alone is sufficient to characterize a token's contribution to RLVR optimization. This naturally raises the following question:
\begin{center}
 \textit{How can we define a principled metric that unifies token entropy and sampled-token probability for identifying influential training tokens?}
\end{center}
\begin{figure*}[t]
  \centering
    \includegraphics[width=0.99\linewidth]{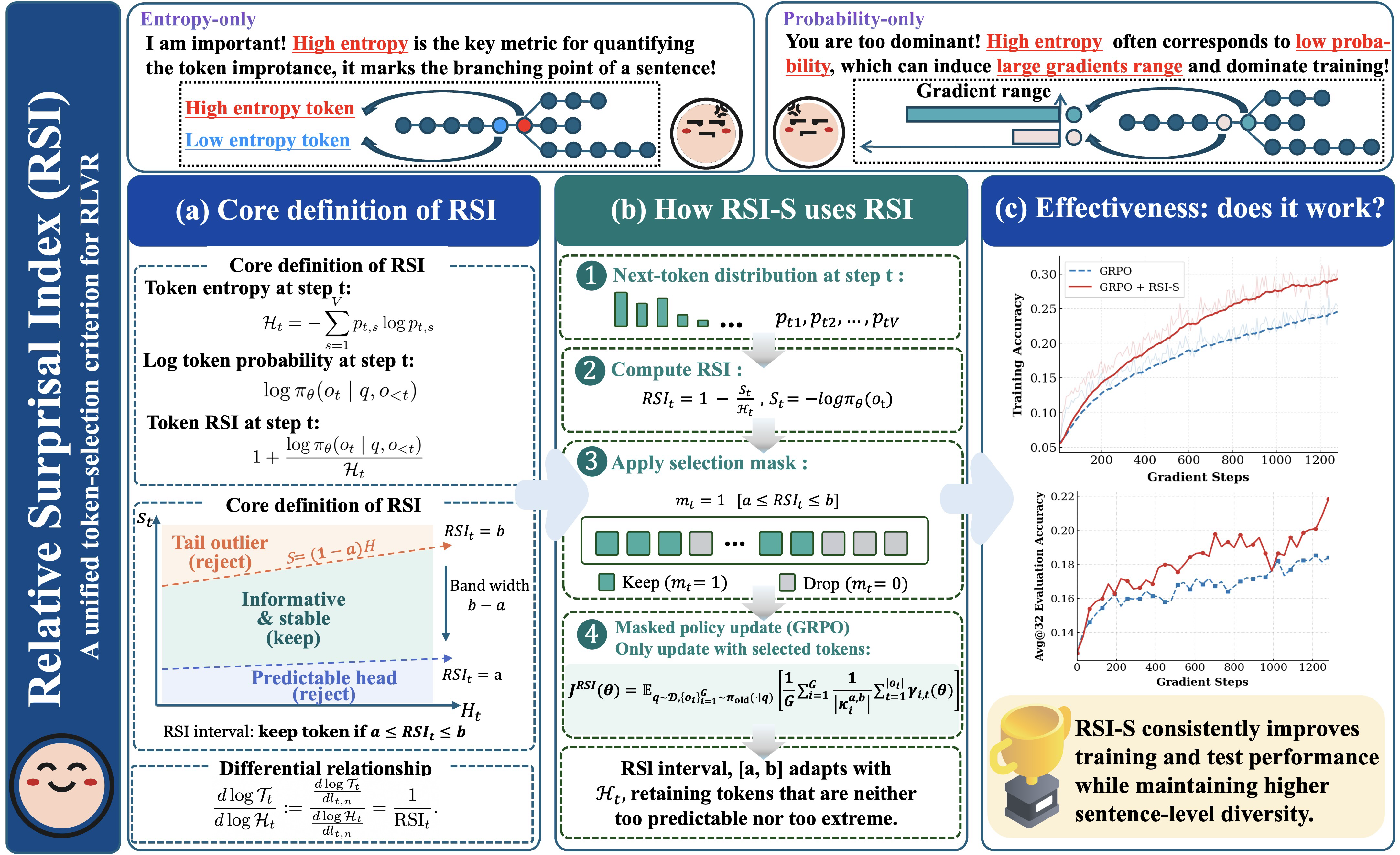} 
    \caption{Overall pipeline of the RSI framework.  (1) We collect all token-position entropy and sampled-token probability to compute the Relative Surprisal Index (RSI) by coupling these two quantities. (2) RSI-S introduces an interval-based token filtering mechanism within $[a, b]$ for RL training. Crucially, the lower bound $a$ removes excessively high-surprisal tail tokens, while the upper bound $b$ removes overly predictable low-surprisal tokens. (3) RSI-S enhances training stability and consistently outperforms other baselines across various model scales and benchmarks.}
    \label{fig:fig_overall}
\end{figure*}
To address this question, we introduce the Relative Surprisal Index (RSI), a principled token-level metric that inherently couples the token's entropy with its sampled-token probability. Compared with entropy alone, RSI further incorporates the likelihood of the realized token, allowing it to distinguish token positions with similar uncertainty but different realized-token probabilities. We show, both analytically and empirically, that RSI offers three main advantages as a token-selection criterion:
\begin{itemize}
    \item \textbf{Analytical connection to token-level gradients.}
    Under mild assumptions, we derive RSI from the relationship between token-level gradient magnitude and its entropy, providing a principled basis for token selection rather than a purely heuristic criterion.
    
    \item \textbf{Connecting entropy- and probability-based views.}
    RSI helps explain the apparent tension between \citet{wang2026beyond} and \citet{yang2025not}: it favors uncertain token positions while accounting for the probability assigned to the selected token. In this way, RSI integrates entropy- and probability-based criteria into a token-level score.

    \item \textbf{Consistent gains across model scales.}
    Experiments on AIME and AMC show that using RSI as a token-importance metric consistently outperforms the entropy-based 80/20 rule~\citep{wang2026beyond} and the probability-based method of~\citet{yang2025not}. Across Qwen2.5-1.5B, 3B, and 7B, RSI-S improves avg@32 accuracy by 2–3 percentage points over GRPO while producing shorter responses. 
\end{itemize}

\subsection{Related Work}
\noindent\textbf{GRPO and its variants.}
Group Relative Policy Optimization (GRPO) \citep{shao2024deepseekmath} motivates a series of recent variants that improve its efficiency, stability, and robustness. DAPO \citep{yu2026dapo} improves the training efficiency of GRPO by removing the KL regularizer and introducing dynamic sampling mechanisms. GSPO \citep{zheng2025group} revisits the importance-sampling formulation in GRPO and moves the correction from the token level to the sequence level. BAPO \citep{xi2025bapo} introduces an adaptive clipping rule to stabilize policy updates and balance the optimization process. GDPO \citep{liu2026gdpo} extends GRPO to multi-reward settings and is designed to mitigate reward hacking under multiple reward signals.

\noindent\textbf{Token selection in RLVR.}
\citet{wang2026beyond} identify an empirical ``80/20 rule,'' showing that a small fraction of high-entropy token positions can account for a large portion of the training benefit. Building on this observation, \citet{lvgmts} connect token entropy with gradient magnitude and propose prioritizing tokens with larger gradient signals. Other studies examine alternative token-level indicators: \citet{huan2025does} and \citet{chen2025reshaping} analyze token-level KL divergence and logit-rank shifts, while \citet{mengsparse} observe that RLVR tends to update a sparse subset of influential tokens. Collectively, these works indicate that effective RLVR does not require treating all tokens equally. However, existing token-selection criteria typically rely on a single signal, such as entropy, gradient magnitude, KL divergence, or logit-rank shift, leaving open how to jointly characterize predictive uncertainty and the probability of the selected token.

\noindent\textbf{Recalibrating token contributions.}
Another line of work modulates token contributions through reweighting rather than hard selection. \citet{yang2025not} highlight the risk of unstable gradients when low-probability tokens dominate LLM updates and propose adjusting token-level advantages to mitigate this issue. \citet{chen2025seed}, \citet{zhang2025edge}, \citet{wang2026entropy} and \citet{tang2026rethinking} study how the sign of the advantage function induces different update behaviors, motivating heterogeneous weighting strategies for positive and negative feedback. \citet{ma2026fipo} and \citet{yao2026} introduce future KL as a signal for estimating a token's downstream effect on subsequent generation, and use it to recalibrate token importance. \citet{zhu2026surprising} decouple the processing of positive and negative samples, introducing an adaptive reweighting mechanism specifically tailored for negative instances.

\section{Preliminaries}
The auto-regressive generation process of an LLM can be formally cast as a finite-horizon Markov Decision Process (MDP). Formally, the policy $\pi_{\theta}$ of the LLM is parameterized by weights $\theta$. Given an input query $\bd{q}$, the LLM generates a response $\bd{o} = (o_1,o_2,o_3,...,o_T)$ auto-regressively token by token, where $T$ denotes the sequence length. Specifically, at each step $t$, the next token $o_t$ is sequentially sampled from the conditional probability distribution governed by the preceding context $\pi_\theta(\cdot \mid \bd{q},\bd{o}_{<t}) = [p_{t,1}, \dots, p_{t,V}]$, where $\bd{o}_{<t}$ represents the previously generated tokens and $V$ is the vocabulary size.
\subsection{Group Relative Policy Optimization}
To mitigate the computational overhead of traditional proximal policy optimization (PPO) \citep{schulman2017proximal}, \citet{shao2024deepseekmath} introduce Group Relative Policy Optimization (GRPO). A defining characteristic of GRPO is that it dispenses with the auxiliary value network typically mandated by PPO. Instead, it leverages the statistical baseline of a localized group of sampled responses to estimate the token-level advantage. Specifically, for a given query $\bd{q}\sim \mathcal{D}$ with ground truth, GRPO uses the old policy $\pi_{\text{old}}$ to generate $G$ responses $\{\bd{o}_i\}_{i=1}^G$, and then the ground truth can be used to verify each response and get the reward list $R = [R_1,R_2,R_3,...,R_G]$, where $R_i=1$ if $\bd{o}_i$ matches the ground-truth answer and $R_i=0$ otherwise. The advantage for the $i$-th response is defined as:
$$
A_i = \frac{R_i - \text{mean}(R)}{\text{std}(R)}.
$$
To constrain policy updates and prevent the learned policy from deviating excessively from the reference policy $\pi_{\text{ref}}$, GRPO introduces a KL-divergence regularization term with respect to $\pi_{\text{ref}}$. Formally, the objective function of GRPO is formulated as follows:
$$
J^{GRPO}(\theta)=\mathbb{E}_{\bd{q}\sim \mathcal{D}, \{\bd{o}_i\}_{i=1}^G\sim \pi_{\text{old}}(\cdot|\bd{q})} \left[ \frac{1}{G}\sum_{i=1}^G\frac{1}{|\bd{o}_i|}\sum_{t=1}^{|\bd{o}_i|} \eta_{i, t}(\theta)\right].
$$
Here, $\eta_{i,t}(\theta)$ incorporates the standard PPO clipping mechanism alongside a reference policy KL penalty:
\begin{align*}
    \eta_{i,t}(\theta) & = \min(r_{i,t}(\theta)A_i,\text{clip}(r_{i,t}(\theta),1-\epsilon_1, 1+\epsilon_2)A_i) \\&- \beta \mathbb{D}_{\text{KL}}\big(\pi_{\theta}(\cdot \mid \bd{q}, \bd{o}_{i,<t}) \parallel \pi_{\text{ref}}(\cdot \mid \bd{q}, \bd{o}_{i,<t})).
\end{align*}

Here $\beta$ is the KL penalty coefficient, $\epsilon_1$ and $\epsilon_2$ are the clipping boundaries, which have been shown to be relevant to the exploration and exploitation of the model \citep{yu2026dapo}, $r_{i,t}(\theta)=\frac{\pi_{\theta}(o_{i,t} \mid \bd{q}, \bd{o}_{i, <t})}{\pi_{\text{old}}(o_{i,t} \mid \bd{q}, \bd{o}_{i, <t})}$.


\begin{figure*}[t]
  \centering
    \includegraphics[width=0.99\linewidth]{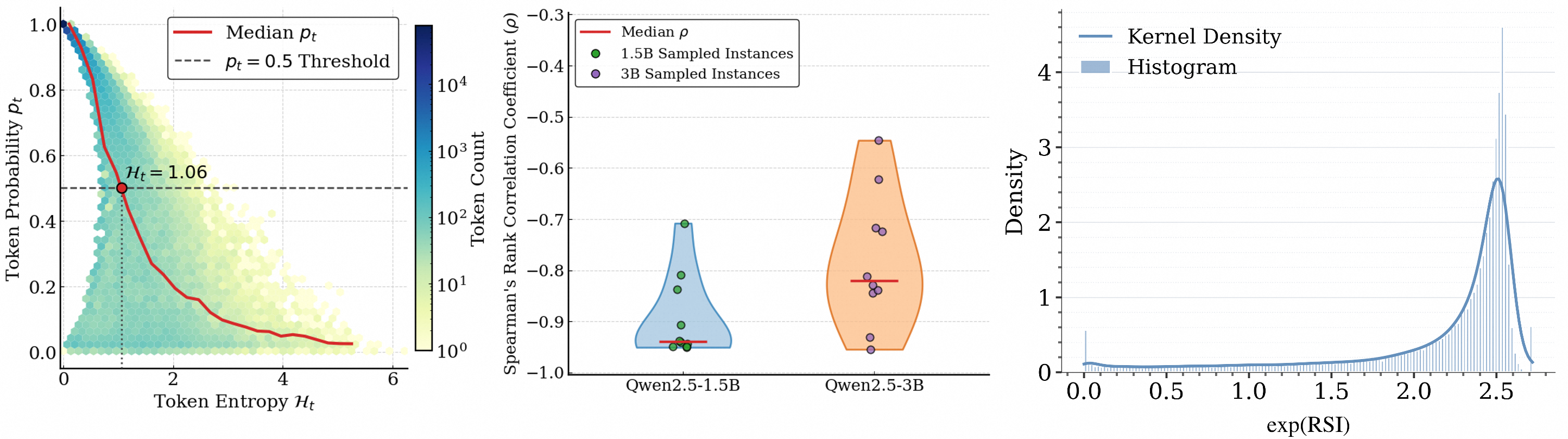} 
    \caption{\textbf{(Left)} Token entropy versus sampled-token probability on \textbf{Qwen2.5-1.5B}, showing that most tokens cluster in the high probability regime, with the median value decaying monotonically as entropy increases. \textbf{(Middle)} Spearman correlation coefficients between token entropy and probability across individual questions for \textbf{Qwen2.5-1.5B / 3B}, substantiating the stable inverse relationship where high entropy often signifies low probability. \textbf{(Right)} The empirical distribution of RSI, demonstrating a heavy concentration near $1$, which implies that the vast majority of generated tokens yield low surprisal.}
    \label{fig:fig1}
\end{figure*}

\section{Methodology}
Our method is motivated by a trade-off between exploration and optimization stability in current RLVR frameworks. On one hand, \citet{wang2026beyond} demonstrate that high-entropy tokens are important for effective learning, and recommend a training regime that prioritizes the top 20\% highest-entropy tokens for performance improvement. In LLMs, such high-entropy positions are often associated with low-probability tokens, as shown in Fig.~\ref{fig:fig1} (left and middle). On the other hand, \citet{yang2025not} caution that over-emphasizing low-probability tokens may destabilize the optimization process and introduce substantial gradient variance. Although seemingly different, the empirical success of both approaches suggests a deeper underlying connection. We posit that these methods are not contradictory, but instead reflect complementary aspects of how entropy and probability characterize the optimization dynamics in RLVR. Therefore, our goal is to define a token-level measure that combines both sources of information, rather than relying on entropy or sampled-token probability alone. Based on this observation, we find that entropy provides a natural normalization structure for token-level uncertainty. Specifically, for the token $o_t \sim \pi_\theta(\cdot \mid \bd{q}, \bd{o}_{<t})
= [p_{t,1}, \dots, p_{t,V}]$, the entropy $\mathcal{H}_t$ of this token is defined as:
$$
\mathcal{H}_t = -\sum_{s=1}^{V} p_{t,s}\log p_{t,s} = -\mathbb{E}_{p_{t,s}}[\log p_{t,s}],
$$
The entropy $\mathcal{H}_t$ can be interpreted as the negative expected log-probability under the token distribution $\pi_\theta(\cdot \mid \bd{q},\bd{o}_{<t})$. For a sampled token $o_t \sim \pi_\theta(\cdot \mid \bd{q}, \bd{o}_{<t})$, let $\pi_\theta(o_t \mid \bd{q}, \bd{o}_{<t}) = p_{t,n}$, its log-probability induces a deviation from the distribution-level average:
\begin{equation*}
\delta_{t}= \log(p_{t,n}) - \mathbb{E}_{p_{t,s}}[\log p_{t,s}],
\end{equation*}
$\delta_t$ measures the deviation of the selected token's log-probability from the distribution-level average log-probability. We then normalize $\delta_t$ using $\mathcal{H}_t$, yielding the Relative Surprisal Index (RSI):
\begin{equation}
\label{eq:RSI}
\mathrm{RSI}_t
=
\frac{\delta_t}{\mathcal{H}_t}
=
1+\frac{\log \pi_\theta(o_t\mid q,o_{<t})}{\mathcal{H}_t}.
\end{equation}
$\mathrm{RSI}_t$ is defined for non-degenerate distributions with $\mathcal{H}_t>0$. For degenerate distributions where $\mathcal{H}_t=0$, we define RSI by continuity as: $\lim_{\mathcal{H}_t\rightarrow 0}\mathrm{RSI}_t=1$, with the proof in Appendix 3. RSI measures the relative surprisal of the selected token with respect to the underlying token distribution. It is neither purely entropy-based nor purely probability-based. Specifically, $\mathrm{RSI}_t > 0$ indicates that the selected token has a higher log-probability than the distribution-level average, corresponding to lower-than-average surprisal. Conversely, $\mathrm{RSI}_t < 0$ indicates that the selected token is less likely than average, corresponding to higher surprisal. 

RSI quantifies the surprisal of the selected token relative to the underlying generative distribution. Beyond its intuitive interpretation, RSI admits a principled theoretical characterization. We show that under mild regularity conditions, RSI can be derived as a normalized form of the differential relationship between token-level entropy and the gradient of the log-probability at the last layer. For a given query $\bd{q}$ and its generated response $\bd{o}$, at each step $t$, let $\bd{l}_t = [l_{t,1}, l_{t,2}, \dots, l_{t,V}]$ denote the logits over the vocabulary produced by the final layer of the LLM. The next token $o_t$ is sampled from the distribution induced by the softmax transformation: $o_t \sim \pi_{\theta}(\cdot\mid \bd{q}, \bd{o}_{<t}) = \operatorname{Softmax}(\bd{l}_t):= [p_{t,1}, p_{t,2}, \dots, p_{t,V}]$. Then we have the following theorem:

\begin{theorem}
\label{theorem1}
Let $\mathcal{T}_{t}$ denote the $\ell_2$-norm of the gradient of the log-probability $\log \pi_{\theta}(o_t \mid \bd{q}, \bd{o}_{<t})$ with respect to the logits $\bd{l}_t$, and let $\mathcal{H}_t$ denote the entropy of the distribution
$\pi_{\theta}(\cdot \mid \bd{q}, \bd{o}_{<t})$. Suppose the generated token $o_t$ corresponds to the $n$-th vocabulary entry, i.e., $\pi_{\theta}(o_t \mid \bd{q}, \bd{o}_{<t}) = p_{t,n}$. Consider a small perturbation $l_{t,n} \rightarrow l_{t,n} + \Delta l_{t,n}$ and $\mathcal{H}_t>0$, $\mathrm{RSI}_t\neq 0 $. Then both $\log \mathcal{T}_t$ and $\log \mathcal{H}_t$ admit first-order variations with respect to $l_{t,n}$, we have:
$$
\frac{d\log \mathcal{T}_{t}}{d\log \mathcal{H}_t} := \frac{\frac{d \log \mathcal{T}_t}{d l_{t,n}}}
{\frac{d \log \mathcal{H}_t}{d l_{t,n}}} = \frac{1}{\mathrm{RSI}_t}.
$$
\end{theorem}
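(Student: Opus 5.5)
The plan is to compute both logarithmic derivatives in closed form and take their ratio. The only inputs are the softmax Jacobian $\partial p_{t,s}/\partial l_{t,n} = p_{t,s}(\mathbb{1}[s=n]-p_{t,n})$ and the definition of $\mathrm{RSI}_t$ in Eq.~\eqref{eq:RSI}. To lighten notation I drop the index $t$ and write $p_s$, $\mathcal{H}$, $\mathcal{T}$, and $S:=\sum_s p_s^2$.

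\textbf{Step 1: closed form for $\mathcal{T}$.} Since $\nabla_{\bd{l}}\log p_n = \bd{e}_n - \bd{p}$, we have
\[
\mathcal{T}^2=(1-p_n)^2+\sum_{s\neq n}p_s^2 = 1-2p_n+S .
\]
I will note that $\mathcal{T}>0$: $\mathcal{T}=0$ would force $\bd{p}=\bd{e}_n$, hence $\mathcal{H}=0$, which is excluded. So $\log\mathcal{T}$ is differentiable in $l_n$.

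\textbf{Step 2: derivative of $\log\mathcal{T}$.} Using the Jacobian,
\[
\frac{dS}{dl_n}=2p_n^2(1-p_n)-2p_n\sum_{s\neq n}p_s^2 = 2p_n(p_n-S).
\]
Therefore
\[
\frac{d\mathcal{T}^2}{dl_n} = -2p_n(1-p_n)+2p_n(p_n-S) = -2p_n(1-2p_n+S) = -2p_n\mathcal{T}^2 .
\]
This gives the clean identity $d\log\mathcal{T}/dl_n=-p_n$. Verifying that this factorization is exact is the step I expect to be the main obstacle, since it is where the algebra could go wrong. Everything else is routine.

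\textbf{Step 3: derivative of $\log\mathcal{H}$.} Differentiating $\mathcal{H}=-\sum_s p_s\log p_s$ gives
\[
\frac{d\mathcal{H}}{dl_n} = -\sum_s(\log p_s+1)\frac{\partial p_s}{\partial l_n}.
\]
The constant term drops out because $\sum_s \partial p_s/\partial l_n=0$. The remaining sum is $-p_n\log p_n + p_n\sum_s p_s\log p_s = -p_n(\log p_n+\mathcal{H})$. Dividing by $\mathcal{H}>0$ and using Eq.~\eqref{eq:RSI},
\[
\frac{d\log\mathcal{H}}{dl_n} = -p_n\Big(1+\frac{\log p_n}{\mathcal{H}}\Big) = -p_n\,\mathrm{RSI} .
\]

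\textbf{Step 4: conclude.} Since $p_n>0$ (softmax output) and $\mathrm{RSI}\neq0$ by assumption, the denominator is nonzero. The ratio is $(-p_n)/(-p_n\,\mathrm{RSI}) = 1/\mathrm{RSI}_t$, as claimed in Theorem~\ref{theorem1}.
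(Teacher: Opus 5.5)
Your proof is correct and follows essentially the same route as the paper: you use the softmax Jacobian to get $d\log\mathcal{T}_t/dl_{t,n}=-p_{t,n}$ (the paper's $\partial\mathcal{T}_t/\partial l_{t,n}=-\mathcal{T}_t p_{t,n}$) and $d\log\mathcal{H}_t/dl_{t,n}=-p_{t,n}\,\mathrm{RSI}_t$, then take the ratio. The only difference is that the paper first derives the gradients with respect to every logit and then restricts to $\Delta l_{t,n}$, whereas you work along $l_{t,n}$ from the start, which is a cleaner version of the same computation.
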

The proof of Theorem~\ref{theorem1} is provided in Appendix~\ref{appendix3}. Theorem~\ref{theorem1} shows that RSI characterizes the local sensitivity between token entropy and the logit-gradient norm. Thus, RSI provides a principled characterization of how perturbations to the logits associated with a selected token affect both predictive uncertainty and gradient magnitude. Under identical logit perturbations, a smaller magnitude $|\mathrm{RSI}_t|$ indicates a stronger coupling between changes in token entropy and the corresponding change in the logit-gradient norm. RSI values are bounded within $(-\infty, 1]$ and exhibit a strong concentration around $1$ (as illustrated in Figure~\ref{fig:fig1} (right)). This empirical distribution suggests that a vast majority of generated tokens carry limited informativeness, contributing marginally to gradient optimization—a phenomenon that aligns with the sparse training findings of \citet{mengsparse}. Furthermore, the empirical density is asymmetric around zero, reflecting an imbalance between the positive and negative RSI regimes. Inspired by Theorem~\ref{theorem1} and the intrinsic property that the expected RSI over the policy distribution $\pi_\theta(\cdot \mid \bd{q}, \bd{o}_{<t})$ is $0$, tokens with moderate RSI values may have more influence on gradient updates compared to extreme values, which motivates an interval-based token filtering anchored around the zero-mean RSI reference. To accommodate the asymmetric empirical density, we apply an asymmetric interval $[a,b]$ ($a < 0 < b$) where the negative tail threshold requires a larger magnitude ($|a| > b$). Formally, a token $o_t$ is retained for policy updates only if its $\mathrm{RSI}_t$ satisfies:
$$
\mathrm{RSI}_t \in [a,b].
$$
This mechanism selects tokens whose log-probabilities are aligned with an entropy-induced reference scale, while excluding both highly predictable tokens and excessively low-probability tail tokens. In other words, this criterion defines a bounded selection region over the current policy distribution, formalized as:
$$
e^{(a-1)\mathcal{H}_t} \leq \pi_\theta(o_t \mid q, o_{<t}) \leq e^{(b-1)\mathcal{H}_t}.
$$
By adjusting the upper bound $b$, the framework suppresses tokens with low surprisal, which correspond to high-probability and low-uncertainty regions of the model distribution. Concurrently, it preserves a moderate surprisal regime, preventing gradient updates from being dominated by rare high-surprisal tokens. On the other hand, adjusting the lower bound $a$ prevents instability caused by excessively high-surprisal tokens. At the same time, it retains a bounded subset of informative high-surprisal tokens that remain useful for policy updates. Overall, adjusting the dual thresholds $a$ and $b$ enables a unified control over token selection, connecting the high-entropy emphasis of \citet{wang2026beyond} with the stability considerations of \citet{yang2025not} and reducing updates from less-informative tokens while mitigating instability induced by extreme high-surprisal tokens.

Crucially, we emphasize that this filtration mechanism differs from standard static probability-thresholding approaches. Instead, RSI-based filtering induces an entropy-adaptive gating mechanism, where the allowed probability range of each token is modulated by the entropy $\mathcal{H}_t$. To operationalize this principle, we introduce a token importance calibration scheme, termed RSI Selection (\textbf{RSI-S}), and incorporate it into the optimization objective. Formally, the objective of RSI-S under GRPO is given by:
$$
J^{RSI}(\theta) = \mathbb{E}_{\bd{q}\sim \mathcal{D}, \{\bd{o}_i\}_{i=1}^G\sim \pi_{\text{old}}(\cdot|\bd{q})} \left[ \frac{1}{G}\sum_{i=1}^G\frac{1}{|\bd{\kappa}^{a,b}_i|}\sum_{t=1}^{|\bd{o}_i|} \gamma_{i,t}(\theta)\right],
$$
where
$$
\begin{aligned}
\gamma_{i,t}(\theta) &= {\mathbb{I}[\mathrm{RSI}_{i, t}\in [a,b]]}\cdot \eta_{i, t}(\theta)\\
&=\begin{cases} 
\eta_{i, t}(\theta), & \text{if }\  \mathrm{RSI}_{i, t} \in [a,b], \\
0, & \text{otherwise.}
\end{cases}
\end{aligned}
$$
and $|\boldsymbol{\kappa}^{a,b}_i| = \#\{t\mid \mathrm{RSI}_{i, t} \in [a,b]\}$ denotes the cardinality of the subset of tokens within the output sequence $\boldsymbol{o}_i$ that actively contribute to the gradient updates. To avoid the ill-defined normalization in our objective, if $|\boldsymbol{\kappa}^{a,b}_i| = 0$, it means the corresponding trajectory $\bd{o}_i$ is excluded from the update, i.e., it contributes zero to the objective function.

\section{Experiments}
\subsection{Training and Evaluation Settings} Our distributed training pipeline is built on \textbf{EasyR1}\footnote{\url{https://github.com/hiyouga/EasyR1/tree/main}}, a modular RLVR framework implemented on top of \textbf{verl}\footnote{\url{https://github.com/verl-project/verl}}. To avoid potential confounding effects from supervised fine-tuning (SFT) on downstream mathematical corpora \citep{wu2026reasoning}, we conduct all experiments using base language models only, specifically \textbf{Qwen2.5-1.5B}, \textbf{Qwen2.5-3B}, and \textbf{Qwen2.5-7B}. We utilize the \textbf{DAPO-MATH-17K} dataset \citep{yu2026dapo} as the primary training set and adopt the following unified prompt template across all training stages:

\begin{figure*}[t]
  \centering
    \includegraphics[width=0.97\linewidth]{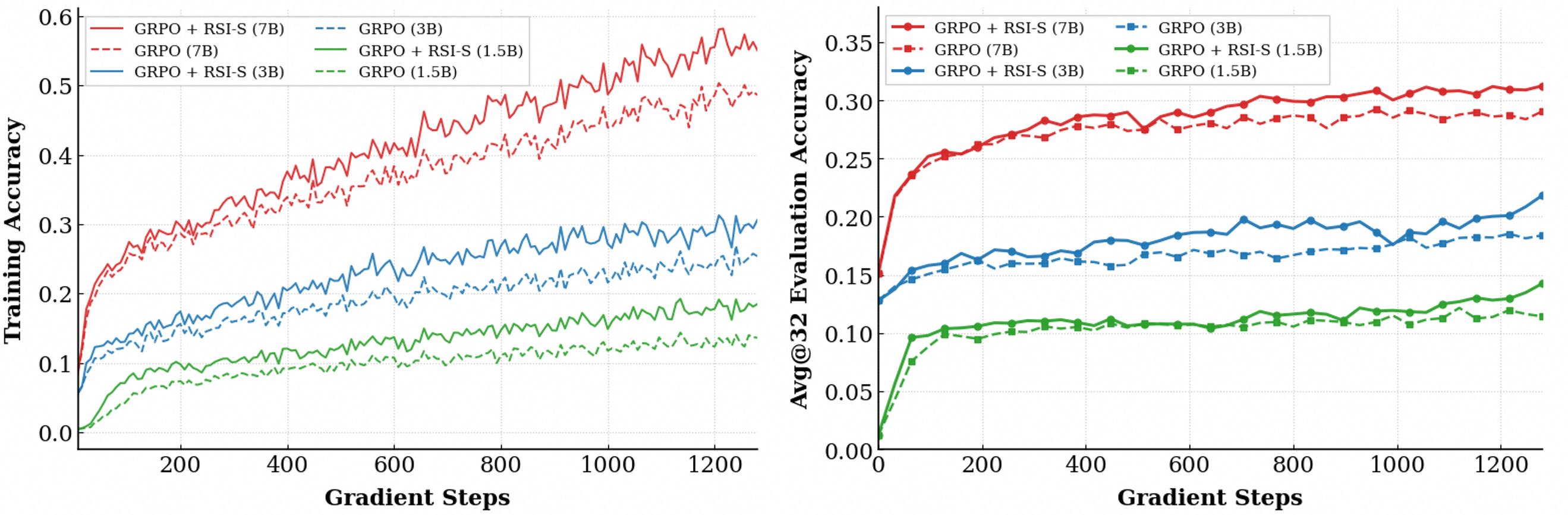} 
    \caption{We contrast baseline GRPO and GRPO with RSI-S across different model scales. Training accuracy curves over gradient steps \textbf{(Left)} and evaluation-results curves (each under avg@32 accuracy) across 6 challenging mathematical benchmarks \textbf{(Right)}.}
    \label{fig:fig3}
\end{figure*}

\begin{center}
\fbox{
\begin{minipage}{0.95\linewidth}
\small\texttt{Please solve this problem step by step, and put your final answer within $\backslash$boxed\{\}.}
\end{minipage}
}
\end{center}

\noindent To evaluate the reasoning and generalization capabilities of the trained policies, we report performance on two benchmark families covering 6 contest-year subsets: \textbf{AIME} (\textbf{AIME 2024}$\sim$\textbf{AIME 2026}) and \textbf{AMC} (\textbf{AMC 2022}$\sim$\textbf{AMC 2024}). We adopt sample-averaged accuracy under a 32-sample rollout protocol (denoted as \textbf{avg@32}) as the primary evaluation metric. Formally, for each evaluation prompt, the model generates \textbf{32} independent response trajectories, and the score is computed as the mean accuracy over these completions. We provide additional details of the training and evaluation procedures in Appendix~\ref{appendix1}.

\subsection{Main Results}
We adopt the 80/20 rule proposed by \citet{wang2026beyond} as our \textbf{entropy baseline} (EB)\footnote{\url{https://github.com/Shenzhi-Wang/Beyond-the-80-20-Rule-RLVR/tree/main}}. We follow the recommendation and retain the top 20\% highest-entropy tokens to compute the GRPO loss. For the \textbf{probability baseline} (PB)\footnote{\url{https://github.com/zhyang2226/AR-Lopti}}, we employ the method introduced by \citet{yang2025not}. For our RSI-S framework, the interval hyperparameters are set to $[-6, 0.95]$, $[-6, 0.9]$, and $[-6, 0.8]$ for the 1.5B, 3B, and 7B models, respectively. Importantly, this scale-specific configuration is not an artifact of meticulous hyperparameter tuning. Rather, it is motivated by the observation that models exhibit distinct behavioral patterns at varying scales, necessitating a downward shift of the surprisal upper bound as capacity increases. Nevertheless, this variation does not imply that a uniform interval degrades RSI-S performance, a point we elaborate on in the subsequent section. The empirical results are summarized in Tables~\ref{tab:main_results_1.5b}--\ref{tab:main_results_7b} and Figure~\ref{fig:fig3}. We evaluate the avg@32 accuracy and average response lengths of Qwen2.5 (1.5B, 3B, and 7B) under the GRPO, EB, PB, and RSI-S frameworks. Crucially, RSI-S consistently enhances performance across all model scales, surpassing both the EB and PB baselines. Moreover, RSI-S yields shorter average response lengths. Furthermore, we observe that under GRPO, the performance of EB and PB is markedly less pronounced on smaller models, exposing their inherent limitations. This phenomenon underscores that RSI-S is a unified framework that effectively synthesizes the strengths of both entropy and probability.

\begin{table*}[h!]
\centering
\caption{The experimental results of GRPO, GRPO + EB, GRPO + PB, GRPO + RSI-S across \textbf{6} math reasoning benchmarks on \textbf{Qwen2.5-1.5B}. We report \textbf{avg@32} accuracy (Acc, \%) and average generation length (Len), along with the improvements.}
\label{tab:main_results_1.5b}
\begin{tabular}{lcccccc|cccc}
\toprule
\multirow{2}{*}{\textbf{Benchmark}} & \multicolumn{2}{c}{\textbf{GRPO}} & \multicolumn{2}{c}{\textbf{GRPO + EB}} & \multicolumn{2}{c}{\textbf{GRPO + PB}} & \multicolumn{2}{c}{\textbf{GRPO + RSI-S}} & \multicolumn{2}{c}{\textbf{Gain vs. GRPO}} \\
\cmidrule(lr){2-3} \cmidrule(lr){4-5} \cmidrule(lr){6-7} \cmidrule(lr){8-9} \cmidrule(lr){10-11}
 & Acc & Len & Acc & Len & Acc & Len & Acc & Len & $\Delta$ Acc & $\Delta$ Len \\
\midrule
AIME (2024$\sim$2026) & 2.36 &  1220.65 & 2.22 & 1138.05 & 2.12 & 1084.09 & \textbf{2.47} & 1082.70 & \textbf{+0.11} & -137.95 \\
AMC (2022$\sim$2024) & 21.95 &  997.41 & 21.06 & 994.19 & 21.54 & 913.13 & \textbf{26.04} & 917.96 & \textbf{+4.09} & -79.45 \\
\midrule
\textbf{Average} & 12.15 &  1109.03 & 11.64 & 1066.12 & 11.83 & 998.61 & \textbf{14.25} & 1000.33 & \textbf{+2.10} & -108.70 \\
\bottomrule
\end{tabular}
\end{table*}

\begin{table*}[h!]
\centering
\caption{The experimental results of GRPO, GRPO + EB, GRPO + PB, GRPO + RSI-S across \textbf{6} math reasoning benchmarks on \textbf{Qwen2.5-3B}. We report \textbf{avg@32} accuracy (Acc, \%) and average generation length (Len), along with the improvements.}
\label{tab:main_results_3b}
\begin{tabular}{lcccccc|cccc}
\toprule
\multirow{2}{*}{\textbf{Benchmark}} & \multicolumn{2}{c}{\textbf{GRPO}} & \multicolumn{2}{c}{\textbf{GRPO + EB}} & \multicolumn{2}{c}{\textbf{GRPO + PB}} & \multicolumn{2}{c}{\textbf{GRPO + RSI-S}} & \multicolumn{2}{c}{\textbf{Gain vs. GRPO}} \\
\cmidrule(lr){2-3} \cmidrule(lr){4-5} \cmidrule(lr){6-7} \cmidrule(lr){8-9} \cmidrule(lr){10-11}
 & Acc & Len & Acc & Len & Acc & Len & Acc & Len & $\Delta$ Acc & $\Delta$ Len \\
\midrule
AIME (2024$\sim$2026) & 5.07 &  1311.26 & 3.85 & 1208.87 & 3.61 & 991.41 & \textbf{5.73} & 1117.29 & \textbf{+0.66} & -193.97 \\
AMC (2022$\sim$2024) & 31.98 &  1098.70 & 31.88 & 1030.31 & 33.02 & 854.23 & \textbf{37.93} & 926.96 & \textbf{+5.95} & -171.74 \\
\midrule
\textbf{Average} & 18.53 &  1204.98 & 17.86 & 1119.59 & 18.32 & 922.82 & \textbf{21.83} & 1022.12 & \textbf{+3.30} & -182.86 \\
\bottomrule
\end{tabular}
\end{table*}

\begin{table*}[h!]
\centering
\caption{The experimental results of GRPO, GRPO + EB, GRPO + PB, GRPO + RSI-S across \textbf{6} math reasoning benchmarks on \textbf{Qwen2.5-7B}. We report \textbf{avg@32} accuracy (Acc, \%) and average generation length (Len), along with the improvements.}
\label{tab:main_results_7b}
\begin{tabular}{lcccccc|cccc}
\toprule
\multirow{2}{*}{\textbf{Benchmark}} & \multicolumn{2}{c}{\textbf{GRPO}} & \multicolumn{2}{c}{\textbf{GRPO + EB}} & \multicolumn{2}{c}{\textbf{GRPO + PB}} & \multicolumn{2}{c}{\textbf{GRPO + RSI-S}} & \multicolumn{2}{c}{\textbf{Gain vs. GRPO}} \\
\cmidrule(lr){2-3} \cmidrule(lr){4-5} \cmidrule(lr){6-7} \cmidrule(lr){8-9} \cmidrule(lr){10-11}
 & Acc & Len & Acc & Len & Acc & Len & Acc & Len & $\Delta$ Acc & $\Delta$ Len \\
\midrule
AIME (2024$\sim$2026) & 10.69 &  1314.80 & 9.97 & 1277.61 & 10.56 & 1124.84 & \textbf{11.25} & 969.56 & \textbf{+0.56} & -345.24 \\
AMC (2022$\sim$2024) & 47.41 &  1019.43 & 49.79 & 1036.50 & 47.19 & 908.22 & \textbf{51.24} & 832.76 & \textbf{+3.83} & -186.67 \\
\midrule
\textbf{Average} & 29.05 &  1167.11 & 29.88 & 1157.06 & 28.87 & 1016.53 & \textbf{31.24} & 901.16 & \textbf{+2.19} & -265.95 \\
\bottomrule
\end{tabular}
\end{table*}

\begin{figure*}[t]
  \centering
  \includegraphics[width=0.99\linewidth]{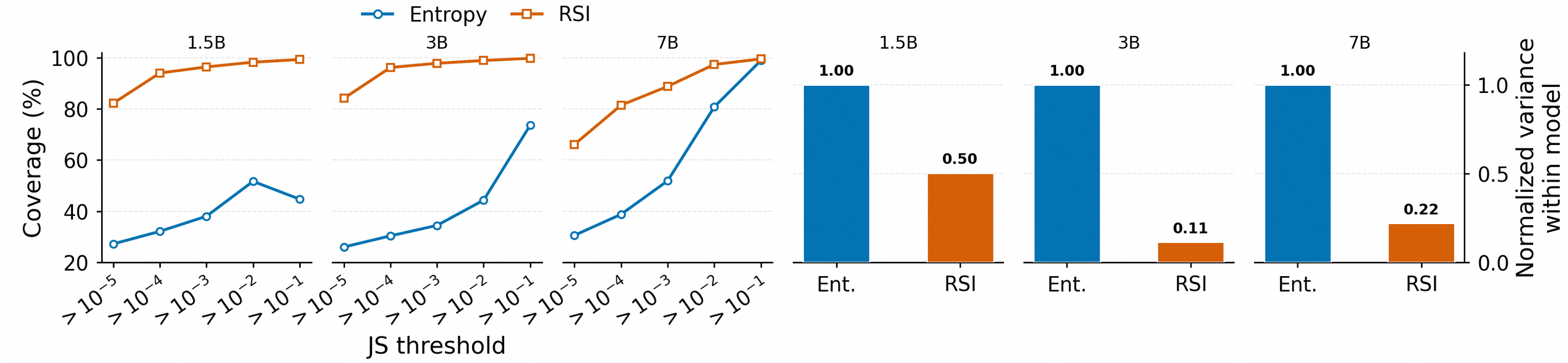} 
    \caption{Recall/coverage of high-JS tokens under different JS thresholds for 1.5B, 3B, and 7B models, comparing RSI with high-entropy token selection. RSI consistently achieves substantially higher coverage than entropy-based selection across model scales and JS thresholds (left). Threshold sensitivity is measured by the per-model normalized variance of coverage across JS thresholds. RSI exhibits only 0.50×, 0.11×, and 0.22× of the sensitivity of entropy for 1.5B, 3B, and 7B, respectively, indicating that RSI more stably captures high-JS tokens under threshold shifts (right).}
    \label{fig:fig4}
\end{figure*}

\subsection{Ablation Study and Discussion}
\textbf{RSI-S vs. Static Probability Thresholds.}  To verify that RSI-S is not equivalent to simple probability truncation, we evaluate \textbf{Qwen2.5-3B} using fixed probability intervals. Tokens outside these predefined ranges (\textbf{P1}: [0.1, 0.9], \textbf{P2}: [0.2, 0.8], \textbf{P3}: [0.3, 0.7]) are strictly masked to neutralize their gradient contributions. As shown in Table~\ref{tab:ab1} (extended in Appendix~\ref{appendix2}), these static baselines fail to achieve the consistent optimization gains of \text{RSI-S}. This performance gap confirms that rather than applying a rigid heuristic, \text{RSI-S} dynamically scales the valid probability space based on token-level entropy, thereby preserving critical tokens that static intervals inadvertently discard.

\begin{table}[t]
\centering
\caption{The final avg@32 accuracy (Acc, \%) and average generation length (Len) performance comparison of GRPO, GRPO + RSI-S and GRPO + P1, P2 and P3 across 6 math reasoning benchmarks on \textbf{Qwen2.5-3B}.}
\label{tab:ab1}
\resizebox{\columnwidth}{!}{
\begin{tabular}{lcccc}
\toprule
\textbf{Method} & \textbf{Acc (\%)} & \textbf{Len} & \textbf{Imp. Acc (\%)} & \textbf{Imp. Len} \\ \midrule
GRPO             & 18.53 & 1204.98 & -- & -- \\
GRPO + RSI-S     & \textbf{21.83} & 1022.12 & \textbf{+3.30} & -182.86 \\
GRPO + P1        & 16.16 & 1058.22 & -2.37 & -146.76\\
GRPO + P2        & 15.84 & 1041.63 & -2.69 & -163.35\\
GRPO + P3        & 15.99 & 1075.54 & -2.54 & -129.44\\
\bottomrule
\end{tabular}
}
\end{table}

\noindent \textbf{The Necessity of Balanced RSI Intervals.} 
To verify the necessity of both RSI bounds, we perform a factorial ablation study along two axes. First, we examine the effect of removing the upper bound by comparing $[a, b]$ with $[a, 1]$ (RSI-SU), which tests whether suppressing overly predictable tokens is necessary. Second, we examine the effect of removing the lower bound by comparing $[a, b]$ with $[-\infty, b]$ (RSI-SL), which tests whether excluding high-surprisal tail tokens is beneficial. As shown in Table~\ref{tab:ab2} for \textbf{Qwen2.5-3B} (extended in Appendix~\ref{appendix2}), both one-sided variants consistently underperform the balanced interval $[a, b]$. This highlights that effective token selection requires a delicate trade-off. Specifically, RSI bounds unify the complementary hypotheses of \citet{wang2026beyond} and \citet{yang2025not}: upper bound $b$ removes predictable low-surprisal tokens; lower bound $a$ removes extreme high-surprisal tail tokens. Removing either bound disrupts this balance.

\begin{table}[t]
\centering
\caption{The final avg@32 accuracy (Acc, \%) and average generation length (Len) performance comparison of GRPO, GRPO + RSI-S, GRPO + RSI-SU and GRPO + RSI-SL across 6 math reasoning benchmarks on \textbf{Qwen2.5-3B}.}
\label{tab:ab2}
\resizebox{\columnwidth}{!}{
\begin{tabular}{lcccc}
\toprule
\textbf{Method} & \textbf{Acc (\%)} & \textbf{Len} & \textbf{Imp. Acc (\%)} & \textbf{Imp. Len} \\ \midrule
GRPO             & 18.53 & 1204.98 & -- & -- \\
GRPO + RSI-S     & \textbf{21.83} & 1022.12 & \textbf{+3.30} & -182.86 \\
GRPO + RSI-SU    & 19.14 & 999.59  & +0.61 & -205.39 \\
GRPO + RSI-SL    & 18.45 & 1101.98 & -0.08 & -103.00 \\
\bottomrule
\end{tabular}
}
\end{table}

\noindent \textbf{Effect of Model Scale on RSI Upper Bound.} As demonstrated in Table~\ref{tab:ab3}, we find the best-performing upper bound $b$  among the tested values shifts toward lower values as model capacity increases, even though our method consistently maintains its superiority under identical settings. To further elucidate this, Figure~\ref{fig:fig4} (left) shows that 1.5b, 3b and 7b have significant differences in behavioral patterns. We attribute this discrepancy to the learning dynamics at varying scales: smaller models primarily prioritize exploitation and knowledge consolidation, thus demanding a larger proportion of low-surprisal tokens. Conversely, larger models exhibit a stronger capacity for active exploration; excessively predictable (low-surprisal) tokens become less critical for optimization, thereby driving the downward shift of the $b$-value. This perspective is well-corroborated by recent literature \citep{cui2025entropy, hao2025rethinking, lu2025generalization, chen2026does}.

\begin{table}[t]
\centering
\caption{The final avg@32 accuracy (Acc, \%) performance comparison of GRPO and GRPO + RSI-S under different RSI intervals across 6 math reasoning benchmarks and model sizes (1.5B, 3B and 7B).}
\label{tab:ab3}
\resizebox{\columnwidth}{!}{
\begin{tabular}{lcccc}
\toprule
\textbf{Model Size} & \textbf{GRPO} & \textbf{+ RSI-S[-6,0.8]} & \textbf{+ RSI-S[-6,0.9]} & \textbf{+ RSI-S[-6,0.95]} \\ \midrule
1.5B    & 12.15 & 12.55 (+0.40)  & 12.69 (+0.54) & \textbf{14.25 (+2.10)} \\
3.0B    & 18.53 & 19.45 (+0.92)  & \textbf{21.83 (+3.30)} & 19.31 (+0.78) \\
7.0B    & 29.05  & \textbf{31.24 (+2.19)}  & 29.13 (+0.08) & 30.11 (+1.06) \\
\bottomrule
\end{tabular}
}
\end{table}

\noindent \textbf{RSI Stably Captures RL-Critical Tokens.}
Following \citet{mengsparse}, we compute the per-token Jensen--Shannon (JS) divergence between pre-RL and post-RL output distributions on 1.5B, 3B, and 7B models, and regard high-JS tokens as those more affected by RL. Figure~\ref{fig:fig4} compares the recall of high-JS tokens by RSI and entropy-based selection under different JS thresholds. RSI consistently covers more high-JS tokens than entropy across model scales, while also exhibiting lower threshold sensitivity. Specifically, the normalized variance of RSI is only $0.50\times$, $0.11\times$, and $0.22\times$ that of entropy on 1.5B, 3B, and 7B models, respectively. This suggests that RSI more stably captures RL-critical tokens, aligning with \citet{mengsparse}'s observation that high-JS tokens are more critical after RL.

\section{Conclusion}
In this paper, we introduce the Relative Surprisal Index (RSI), a novel metric designed to quantify token importance. Moving beyond existing paradigms, RSI successfully reconciles the seemingly contradictory conclusions derived from predictive entropy and sampled-token probability, unifying both perspectives through RSI-S, a token-selection method built on RSI. Crucially, we demonstrate that under the conditions specified in Theorem 1, RSI emerges as a mathematically principled consequence of entropy and gradient differentiation, rather than a mere heuristic formulation. Finally, we conduct extensive numerical experiments across different model scales to evaluate the scalability and generalization of RSI-S. Coupled with rigorous ablation studies, the empirical results provide encouraging evidence that RSI delivers consistent, stable, and parameter-robust performance gains for RLVR. These findings underscore RSI's potential as a promising token-selection criterion for RLVR.

{
    \small
    \bibliographystyle{ieeenat_fullname}
    \bibliography{main}

\begin{thebibliography}{35}
\providecommand{\natexlab}[1]{#1}
\providecommand{\url}[1]{\texttt{#1}}
\expandafter\ifx\csname urlstyle\endcsname\relax
  \providecommand{\doi}[1]{doi: #1}\else
  \providecommand{\doi}{doi: \begingroup \urlstyle{rm}\Url}\fi

\bibitem[Achiam et~al.(2023)Achiam, Adler, Agarwal, Ahmad, Akkaya, Aleman, Almeida, Altenschmidt, Altman, Anadkat, et~al.]{achiam2023gpt}
Josh Achiam, Steven Adler, Sandhini Agarwal, Lama Ahmad, Ilge Akkaya, Florencia~Leoni Aleman, Diogo Almeida, Janko Altenschmidt, Sam Altman, Shyamal Anadkat, et~al.
\newblock Gpt-4 technical report.
\newblock \emph{arXiv preprint arXiv:2303.08774}, 2023.

\bibitem[Bie et~al.(2026)Bie, Cao, Cao, Chen, Chen, Chen, Du, Feng, Feng, Gong, et~al.]{bie2026llada2}
Tiwei Bie, Maosong Cao, Xiang Cao, Bingsen Chen, Fuyuan Chen, Kun Chen, Lun Du, Daozhuo Feng, Haibo Feng, Mingliang Gong, et~al.
\newblock Llada2. 1: Speeding up text diffusion via token editing.
\newblock \emph{arXiv preprint arXiv:2602.08676}, 2026.

\bibitem[Chen et~al.(2025{\natexlab{a}})Chen, Chen, Wang, and Yang]{chen2025seed}
Minghan Chen, Guikun Chen, Wenguan Wang, and Yi Yang.
\newblock Seed-grpo: Semantic entropy enhanced grpo for uncertainty-aware policy optimization.
\newblock \emph{arXiv preprint arXiv:2505.12346}, 2025{\natexlab{a}}.

\bibitem[Chen et~al.(2025{\natexlab{b}})Chen, Li, and Zou]{chen2025reshaping}
Xingwu Chen, Tianle Li, and Difan Zou.
\newblock Reshaping reasoning in llms: A theoretical analysis of rl training dynamics through pattern selection.
\newblock \emph{arXiv preprint arXiv:2506.04695}, 2025{\natexlab{b}}.

\bibitem[Chen et~al.(2026)Chen, Lu, Zhao, Wang, Yue, Song, and Huang]{chen2026does}
Zhiqi Chen, Rui Lu, Andrew Zhao, Zhaokai Wang, Yang Yue, Shiji Song, and Gao Huang.
\newblock Does reinforcement learning really incentivize reasoning capacity in llms beyond the base model?
\newblock \emph{Advances in Neural Information Processing Systems}, 38:\penalty0 57654--57689, 2026.

\bibitem[Chu et~al.(2025)Chu, Zhai, Yang, Tong, Xie, Schuurmans, Le, Levine, and Ma]{chusft}
Tianzhe Chu, Yuexiang Zhai, Jihan Yang, Shengbang Tong, Saining Xie, Dale Schuurmans, Quoc~V Le, Sergey Levine, and Yi Ma.
\newblock Sft memorizes, rl generalizes: A comparative study of foundation model post-training.
\newblock In \emph{Forty-second International Conference on Machine Learning}, 2025.

\bibitem[Cui et~al.(2025)Cui, Zhang, Chen, Yuan, Wang, Zuo, Li, Fan, Chen, Chen, et~al.]{cui2025entropy}
Ganqu Cui, Yuchen Zhang, Jiacheng Chen, Lifan Yuan, Zhi Wang, Yuxin Zuo, Haozhan Li, Yuchen Fan, Huayu Chen, Weize Chen, et~al.
\newblock The entropy mechanism of reinforcement learning for reasoning language models.
\newblock \emph{arXiv preprint arXiv:2505.22617}, 2025.

\bibitem[Hao et~al.(2025)Hao, Wang, Liu, Luo, Yu, Dong, Lin, Wang, and Chen]{hao2025rethinking}
Zhezheng Hao, Hong Wang, Haoyang Liu, Jian Luo, Jiarui Yu, Hande Dong, Qiang Lin, Can Wang, and Jiawei Chen.
\newblock Rethinking entropy interventions in rlvr: An entropy change perspective.
\newblock \emph{arXiv preprint arXiv:2510.10150}, 2025.

\bibitem[Huan et~al.(2025)Huan, Li, Zheng, Xu, Kim, Du, Poovendran, Neubig, and Yue]{huan2025does}
Maggie Huan, Yuetai Li, Tuney Zheng, Xiaoyu Xu, Seungone Kim, Minxin Du, Radha Poovendran, Graham Neubig, and Xiang Yue.
\newblock Does math reasoning improve general llm capabilities? understanding transferability of llm reasoning.
\newblock \emph{arXiv preprint arXiv:2507.00432}, 2025.

\bibitem[Liu et~al.(2026)Liu, Dong, Lu, Diao, Belcak, Liu, Chen, Yin, Wang, Cheng, et~al.]{liu2026gdpo}
Shih-Yang Liu, Xin Dong, Ximing Lu, Shizhe Diao, Peter Belcak, Mingjie Liu, Min-Hung Chen, Hongxu Yin, Yu-Chiang~Frank Wang, Kwang-Ting Cheng, et~al.
\newblock Gdpo: Group reward-decoupled normalization policy optimization for multi-reward rl optimization.
\newblock \emph{arXiv preprint arXiv:2601.05242}, 2026.

\bibitem[Liu et~al.(2025)Liu, Chen, Li, Qi, Pang, Du, Lee, and Lin]{liuunderstanding}
Zichen Liu, Changyu Chen, Wenjun Li, Penghui Qi, Tianyu Pang, Chao Du, Wee~Sun Lee, and Min Lin.
\newblock Understanding r1-zero-like training: A critical perspective.
\newblock In \emph{Second Conference on Language Modeling}, 2025.

\bibitem[Lu et~al.(2025)Lu, Zhao, Sun, Peng, Ding, and Mei]{lu2025generalization}
Brian Lu, Hongyu Zhao, Shuo Sun, Hao Peng, Rui Ding, and Hongyuan Mei.
\newblock Generalization of rlvr using causal reasoning as a testbed.
\newblock \emph{arXiv preprint arXiv:2512.20760}, 2025.

\bibitem[Lv et~al.(2026)Lv, Zhang, et~al.]{lvgmts}
Outongyi Lv, Yuanwei Zhang, et~al.
\newblock Gmts: Gradient magnitude-based token selection improves rlvr training for llm reasoning, 2026.

\bibitem[Ma et~al.(2026)Ma, Yang, Huang, Lu, Meng, Wang, Ding, Vosoughi, Wang, and Zhou]{ma2026fipo}
Chiyu Ma, Shuo Yang, Kexin Huang, Jinda Lu, Haoming Meng, Shangshang Wang, Bolin Ding, Soroush Vosoughi, Guoyin Wang, and Jingren Zhou.
\newblock Fipo: Eliciting deep reasoning with future-kl influenced policy optimization.
\newblock \emph{arXiv preprint arXiv:2603.19835}, 2026.

\bibitem[Meng et~al.(2026)Meng, Huang, Wei, Ma, Yang, Wang, Wang, Ding, and Zhou]{mengsparse}
Haoming Meng, Kexin Huang, Shaohang Wei, Chiyu Ma, Shuo Yang, Xue Wang, Guoyin Wang, Bolin Ding, and Jingren Zhou.
\newblock Sparse but critical: A token-level analysis of distributional shifts in rlvr fine-tuning of llms.
\newblock In \emph{The Fourteenth International Conference on Learning Representations}, 2026.

\bibitem[Ouyang et~al.(2022)Ouyang, Wu, Jiang, Almeida, Wainwright, Mishkin, Zhang, Agarwal, Slama, Ray, et~al.]{ouyang2022training}
Long Ouyang, Jeffrey Wu, Xu Jiang, Diogo Almeida, Carroll Wainwright, Pamela Mishkin, Chong Zhang, Sandhini Agarwal, Katarina Slama, Alex Ray, et~al.
\newblock Training language models to follow instructions with human feedback.
\newblock \emph{Advances in neural information processing systems}, 35:\penalty0 27730--27744, 2022.

\bibitem[Radford et~al.(2018)Radford, Narasimhan, Salimans, Sutskever, et~al.]{radford2018improving}
Alec Radford, Karthik Narasimhan, Tim Salimans, Ilya Sutskever, et~al.
\newblock Improving language understanding by generative pre-training.
\newblock 2018.

\bibitem[Schulman et~al.(2017)Schulman, Wolski, Dhariwal, Radford, and Klimov]{schulman2017proximal}
John Schulman, Filip Wolski, Prafulla Dhariwal, Alec Radford, and Oleg Klimov.
\newblock Proximal policy optimization algorithms.
\newblock \emph{arXiv preprint arXiv:1707.06347}, 2017.

\bibitem[Shao et~al.(2024)Shao, Wang, Zhu, Xu, Song, Bi, Zhang, Zhang, Li, et~al.]{shao2024deepseekmath}
Zhihong Shao, Peiyi Wang, Qihao Zhu, Runxin Xu, Junxiao Song, Xiao Bi, Haowei Zhang, Mingchuan Zhang, YK Li, et~al.
\newblock Deepseekmath: Pushing the limits of mathematical reasoning in open language models.
\newblock \emph{arXiv preprint arXiv:2402.03300}, 2024.

\bibitem[Shao et~al.(2025)Shao, Luo, Lu, Ren, Hu, Ye, Gou, Ma, and Zhang]{shao2025deepseekmath}
Zhihong Shao, Yuxiang Luo, Chengda Lu, ZZ Ren, Jiewen Hu, Tian Ye, Zhibin Gou, Shirong Ma, and Xiaokang Zhang.
\newblock Deepseekmath-v2: Towards self-verifiable mathematical reasoning.
\newblock \emph{arXiv preprint arXiv:2511.22570}, 2025.

\bibitem[Tang et~al.(2026)Tang, Zhan, Li, Zhao, Zhang, Wen, Zhang, and Zhou]{tang2026rethinking}
Xinyu Tang, Yuliang Zhan, Zhixun Li, Wayne~Xin Zhao, Zhenduo Zhang, Zujie Wen, Zhiqiang Zhang, and Jun Zhou.
\newblock Rethinking sample polarity in reinforcement learning with verifiable rewards.
\newblock In \emph{Proceedings of the 64th Annual Meeting of the Association for Computational Linguistics (Volume 1: Long Papers)}, pages 2928--2954, 2026.

\bibitem[Team(2026)]{team2026qwen3}
Qwen Team.
\newblock Qwen3. 5-omni technical report.
\newblock \emph{arXiv preprint arXiv:2604.15804}, 2026.

\bibitem[Wang et~al.(2026{\natexlab{a}})Wang, Xie, Zhang, Sun, Chen, Li, and Zhang]{wang2026entropy}
Shumin Wang, Yuexiang Xie, Wenhao Zhang, Yuchang Sun, Yanxi Chen, Yaliang Li, and Yanyong Zhang.
\newblock On the entropy dynamics in reinforcement fine-tuning of large language models.
\newblock \emph{arXiv preprint arXiv:2602.03392}, 2026{\natexlab{a}}.

\bibitem[Wang et~al.(2026{\natexlab{b}})Wang, Yu, Gao, Zheng, Liu, Lu, Dang, Chen, Yang, Zhang, et~al.]{wang2026beyond}
Shenzhi Wang, Le Yu, Chang Gao, Chujie Zheng, Shixuan Liu, Rui Lu, Kai Dang, Xiong-Hui Chen, Jianxin Yang, Zhenru Zhang, et~al.
\newblock Beyond the 80/20 rule: High-entropy minority tokens drive effective reinforcement learning for llm reasoning.
\newblock \emph{Advances in Neural Information Processing Systems}, 38:\penalty0 115452--115486, 2026{\natexlab{b}}.

\bibitem[Wang et~al.(2023)Wang, Wei, Schuurmans, Le, Chi, Narang, Chowdhery, and Zhou]{wangself}
Xuezhi Wang, Jason Wei, Dale Schuurmans, Quoc~V Le, Ed~H Chi, Sharan Narang, Aakanksha Chowdhery, and Denny Zhou.
\newblock Self-consistency improves chain of thought reasoning in language models.
\newblock In \emph{The Eleventh International Conference on Learning Representations}, 2023.

\bibitem[Wu et~al.(2026)Wu, Zhang, Dong, Xi, Zhao, Jin, Fan, Zhou, Lv, Zhang, et~al.]{wu2026reasoning}
Mingqi Wu, Zhihao Zhang, Qiaole Dong, Zhiheng Xi, Jun Zhao, Senjie Jin, Xiaoran Fan, Yuhao Zhou, Huijie Lv, Ming Zhang, et~al.
\newblock Reasoning or memorization? unreliable results of reinforcement learning due to data contamination.
\newblock In \emph{Proceedings of the AAAI Conference on Artificial Intelligence}, pages 33944--33952, 2026.

\bibitem[Xi et~al.(2025)Xi, Guo, Nan, Zhou, Shen, Chen, Liu, Huang, Zhang, Guo, et~al.]{xi2025bapo}
Zhiheng Xi, Xin Guo, Yang Nan, Enyu Zhou, Junrui Shen, Wenxiang Chen, Jiaqi Liu, Jixuan Huang, Zhihao Zhang, Honglin Guo, et~al.
\newblock Bapo: Stabilizing off-policy reinforcement learning for llms via balanced policy optimization with adaptive clipping.
\newblock \emph{arXiv preprint arXiv:2510.18927}, 2025.

\bibitem[Yang et~al.(2024)Yang, Zhang, Hui, Gao, Yu, Li, Liu, Tu, Zhou, Lin, et~al.]{yang2024qwen2}
An Yang, Beichen Zhang, Binyuan Hui, Bofei Gao, Bowen Yu, Chengpeng Li, Dayiheng Liu, Jianhong Tu, Jingren Zhou, Junyang Lin, et~al.
\newblock Qwen2. 5-math technical report: Toward mathematical expert model via self-improvement.
\newblock \emph{arXiv preprint arXiv:2409.12122}, 2024.

\bibitem[Yang et~al.(2025)Yang, Luo, Wang, Han, He, Li, and Xu]{yang2025not}
Zhihe Yang, Xufang Luo, Zilong Wang, Dongqi Han, Zhiyuan He, Dongsheng Li, and Yunjian Xu.
\newblock Do not let low-probability tokens over-dominate in rl for llms.
\newblock In \emph{2nd AI for Math Workshop@ ICML}, 2025.

\bibitem[Yao et~al.(2026)Yao, Wang, et~al.]{yao2026}
Jiarui Yao, Ruida Wang, et~al.
\newblock Future-kl regularized grpo: Process-level credit assignment from f-divergence regularization.
\newblock \emph{arXiv preprint arXiv:2601.10201}, 2026.

\bibitem[Yu et~al.(2026)Yu, Zhang, Zhu, Yuan, Zuo, Yue, Dai, Fan, Liu, Liu, et~al.]{yu2026dapo}
Qiying Yu, Zheng Zhang, Ruofei Zhu, Yufeng Yuan, Xiaochen Zuo, Yu Yue, Weinan Dai, Tiantian Fan, Gaohong Liu, Lingjun Liu, et~al.
\newblock Dapo: An open-source llm reinforcement learning system at scale.
\newblock \emph{Advances in Neural Information Processing Systems}, 38:\penalty0 113222--113244, 2026.

\bibitem[Zeng et~al.(2026)Zeng, Lv, Hou, Du, Zheng, Chen, Yin, Ge, Huang, Xie, et~al.]{zeng2026glm}
Aohan Zeng, Xin Lv, Zhenyu Hou, Zhengxiao Du, Qinkai Zheng, Bin Chen, Da Yin, Chendi Ge, Chenghua Huang, Chengxing Xie, et~al.
\newblock Glm-5: from vibe coding to agentic engineering.
\newblock \emph{arXiv preprint arXiv:2602.15763}, 2026.

\bibitem[Zhang et~al.(2025)Zhang, Wen, Wu, and Huang]{zhang2025edge}
Xingjian Zhang, Siwei Wen, Wenjun Wu, and Lei Huang.
\newblock Edge-grpo: Entropy-driven grpo with guided error correction for advantage diversity.
\newblock \emph{arXiv preprint arXiv:2507.21848}, 2025.

\bibitem[Zheng et~al.(2025)Zheng, Liu, Li, Chen, Yu, Gao, Dang, Liu, Men, Yang, et~al.]{zheng2025group}
Chujie Zheng, Shixuan Liu, Mingze Li, Xiong-Hui Chen, Bowen Yu, Chang Gao, Kai Dang, Yuqiong Liu, Rui Men, An Yang, et~al.
\newblock Group sequence policy optimization.
\newblock \emph{arXiv preprint arXiv:2507.18071}, 2025.

\bibitem[Zhu et~al.(2026)Zhu, Xia, Wei, Chen, Chen, and Meng]{zhu2026surprising}
Xinyu Zhu, Mengzhou Xia, Zhepei Wei, Wei-Lin Chen, Danqi Chen, and Yu Meng.
\newblock The surprising effectiveness of negative reinforcement in llm reasoning.
\newblock \emph{Advances in Neural Information Processing Systems}, 38:\penalty0 126546--126573, 2026.

\end{thebibliography}
}

\clearpage

\section{Appendix}
\subsection{Appendix A}
\label{appendix1}
All experiments are conducted with random seed fixed to $0$ unless otherwise specified. For the \textbf{PB} experiments, we directly follow the released implementation of \citet{yang2025not}. For the \textbf{EB} experiments, we port the key micro-batch processing routine from \citet{wang2026beyond} into our EasyR1-based training pipeline, which enables token-level filtering under the same training framework as our method. During training, responses are sampled with temperature $1.0$ and top-$p$ $1.0$. The maximum input length and maximum output length are set to $1024$ and $4096$, respectively. We use a learning rate of $1\times 10^{-6}$. The batch size and mini-batch size are set to $1024$ and $128$, respectively; therefore, each outer training step is divided into $8$ mini-batch gradient updates. The GRPO group size is set to $G=16$. We train each model for $10$ epochs. Since \texttt{drop\_last} is enabled for the training set, this corresponds to $160$ outer training steps and $1280$ mini-batch gradient updates in total. Unless explicitly stated otherwise, all remaining hyperparameters follow the default EasyR1 configuration. For evaluation, we use temperature $0.6$ and top-$p$ $0.95$. We perform avg@32 evaluation every $4$ outer training steps, where each evaluation example is sampled $32$ times and the reported score is averaged over these samples.

\subsection{Appendix B}
\label{appendix2}
We report the additional ablation results in Tables~\ref{tab:ab_appendix21}--\ref{tab:ab_appendix24}. These results complement the main ablations in Tables~\ref{tab:ab1} and~\ref{tab:ab2}. Together with Table~\ref{tab:ab1}, Tables~\ref{tab:ab_appendix21} and~\ref{tab:ab_appendix22} show that the improvement of RSI-S cannot be attributed to a fixed hard probability threshold. Instead, RSI-S defines an entropy-adaptive probability interval for token selection: the effective selection range changes according to the uncertainty of the policy distribution rather than remaining fixed across all decoding states. This pattern is consistent across different model scales, suggesting that the benefit of RSI-S comes from adaptive token selection rather than from a manually chosen static cutoff. Tables~\ref{tab:ab_appendix23} and~\ref{tab:ab_appendix24} further extend the analysis in Table~\ref{tab:ab2} by ablating the two RSI-S boundaries, $a$ and $b$. The results indicate that both constraints are necessary for stable and effective token selection. The lower bound removes extreme high-surprisal tail tokens and the upper bound removes predictable low-surprisal tokens. Removing either boundary weakens the stability of the selected token set and leads to inferior performance. These findings support the design choice of using both lower and upper RSI constraints in RSI-S.

\begin{table}[t]
\centering
\caption{The final avg@32 accuracy (Acc, \%) and average generation length (Len) performance comparison of GRPO, GRPO + RSI-S and GRPO + P1, P2 and P3 across 6 math reasoning benchmarks on \textbf{Qwen2.5-1.5B}.}
\label{tab:ab_appendix21}
\resizebox{\columnwidth}{!}{
\begin{tabular}{lcccc}
\toprule
\textbf{Method} & \textbf{Acc (\%)} & \textbf{Len} & \textbf{Imp. Acc (\%)} & \textbf{Imp. Len} \\ \midrule
GRPO             & 12.15 & 1109.03 & -- & -- \\
GRPO + RSI-S     & \textbf{14.25} & 1000.33 & \textbf{+2.10} & -108.70 \\
GRPO + P1        & 11.35 & 1064.06 & -0.80 & -44.97 \\
GRPO + P2        & 11.44 & 949.45  & -0.71 & -159.58 \\
GRPO + P3        & 11.05 & 1017.59 & -1.10 & -91.44 \\
\bottomrule
\end{tabular}
}
\end{table}

\begin{table}[t]
\centering
\caption{The final avg@32 accuracy (Acc, \%) and average generation length (Len) performance comparison of GRPO, GRPO + RSI-S and GRPO + P1, P2 and P3 across 6 math reasoning benchmarks on \textbf{Qwen2.5-7B}.}
\label{tab:ab_appendix22}
\resizebox{\columnwidth}{!}{
\begin{tabular}{lcccc}
\toprule
\textbf{Method} & \textbf{Acc (\%)} & \textbf{Len} & \textbf{Imp. Acc (\%)} & \textbf{Imp. Len} \\ \midrule
GRPO             & 29.05 & 1167.11 & -- & -- \\
GRPO + RSI-S     & \textbf{31.24} & 901.16 & \textbf{+2.19} & -265.95 \\
GRPO + P1        & 26.14 & 1115.01 & -2.91 & -52.10 \\
GRPO + P2        & 24.16 & 947.22 & -4.89 & -219.89 \\
GRPO + P3        & 24.56 & 992.48 & -4.49 & -174.63 \\
\bottomrule
\end{tabular}
}
\end{table}

\begin{table}[t]
\centering
\caption{The final avg@32 accuracy (Acc, \%) and average generation length (Len) performance comparison of GRPO, GRPO + RSI-S, GRPO + RSI-SU and GRPO + RSI-SL across 6 math reasoning benchmarks on \textbf{Qwen2.5-1.5B}.}
\label{tab:ab_appendix23}
\resizebox{\columnwidth}{!}{
\begin{tabular}{lcccc}
\toprule
\textbf{Method} & \textbf{Acc (\%)} & \textbf{Len} & \textbf{Imp. Acc (\%)} & \textbf{Imp. Len} \\ \midrule
GRPO             & 12.15 & 1109.03 & -- & -- \\
GRPO + RSI-S     & \textbf{14.25} & 1000.33 & \textbf{+2.10} & -108.70 \\
GRPO + RSI-SU    & 12.64 & 1089.25  & +0.49 & -19.78 \\
GRPO + RSI-SL    & 11.81 & 1118.04  & -0.34 & +9.01 \\
\bottomrule
\end{tabular}
}
\end{table}

\begin{table}[t]
\centering
\caption{The final avg@32 accuracy (Acc, \%) and average generation length (Len) performance comparison of GRPO, GRPO + RSI-S, GRPO + RSI-SU and GRPO + RSI-SL across 6 math reasoning benchmarks on \textbf{Qwen2.5-7B}.}
\label{tab:ab_appendix24}
\resizebox{\columnwidth}{!}{
\begin{tabular}{lcccc}
\toprule
\textbf{Method} & \textbf{Acc (\%)} & \textbf{Len} & \textbf{Imp. Acc (\%)} & \textbf{Imp. Len} \\ \midrule
GRPO             & 29.05 & 1167.11 & -- & -- \\
GRPO + RSI-S     & \textbf{31.24} & 901.16 & \textbf{+2.19} & -265.95 \\
GRPO + RSI-SU    & 28.93 & 1000.66  & -0.12 & -166.45 \\
GRPO + RSI-SL    & 29.17 & 1016.53  & +0.12 & -150.58 \\
\bottomrule
\end{tabular}
}
\end{table}

\subsection{Appendix C}
\label{appendix3}
\noindent \textbf{Proposition.}
Assume that the vocabulary size $V$ is finite. Let 
$p_{t,n}=\pi_\theta(o_t\mid q,o_{<t})$. If $p_{t,n}\to1$, or equivalently
$p_{t,n}=1-\varepsilon$ with $\varepsilon\to0$, then
$$
\lim_{\mathcal{H}_t\rightarrow 0}\mathrm{RSI}_t=1.
$$
\begin{proof}
By definition,
$$
\mathrm{RSI}_t=1+\frac{\log \pi_\theta(o_t\mid q,o_{<t})}{\mathcal{H}_t}.
$$
According to the definition of $\mathcal{H}_t$:
\begin{align*}
    \mathcal{H}_t  &= -\sum_{s=1}^{V} p_{t,s}\log p_{t,s} \\
    &= - p_{t,n}\log p_{t,n} - \\&\sum_{s\neq n}(1-p_{t,n})\frac{p_{t,s}}{(1-p_{t,n})} \log ((1-p_{t,n})\frac{p_{t,s}}{(1-p_{t,n})}) 
    \\&=  - p_{t,n}\log p_{t,n}-(1-p_{t,n}) \log(1-p_{t,n}) -\\&
    (1-p_{t,n})\sum_{s\neq n} \frac{p_{t,s}}{(1-p_{t,n})}\log \frac{p_{t,s}}{(1-p_{t,n})}.
    \\
\end{align*}
Observe that $-\sum_{s\neq n}\frac{p_{t,s}}{(1-p_{t,n})}\log \frac{p_{t,s}}{(1-p_{t,n})}$ is the entropy of the normalized residual distribution over the remaining
vocabulary. We denote it by $\mathcal{H}_t(p/p_{t,n})$. Moreover, $- p_{t,n}\log p_{t,n}-(1-p_{t,n}) \log(1-p_{t,n})$ is the binary entropy of the split $[p_{t,n},1-p_{t,n}]$, denoted by $\mathcal{H}_t(p_{t,n})$. Hence,
$$
\mathcal{H}_t = \mathcal{H}_t(p_{t,n}) + (1-p_{t,n}) \mathcal{H}_t(p/p_{t,n}).
$$
Therefore,
\begin{align*}
    \lim_{\mathcal{H}_t\rightarrow 0}\mathrm{RSI}_t & = \lim_{\varepsilon \rightarrow 0}\mathrm{RSI}_t = 1+\lim_{\varepsilon \rightarrow 0} \frac{\log(1-\varepsilon)}{\mathcal{H}_t}.
\end{align*}
Where:
$$
\mathcal{H}_t= -\varepsilon\log\varepsilon-(1-\varepsilon)\log((1-\varepsilon))+\varepsilon \mathcal{H}_t(p/p_{t,n}),
$$
It suffices to show that:
$$
\lim_{\varepsilon\rightarrow 0}\frac{\log(1-\varepsilon)}{-\varepsilon\log\varepsilon-(1-\varepsilon)\log((1-\varepsilon))+\varepsilon \mathcal{H}_t(p/p_{t,n})}=0.
$$
Let $D_{\varepsilon} = -\varepsilon\log\varepsilon-(1-\varepsilon)\log((1-\varepsilon))+\varepsilon \mathcal{H}_t(p/p_{t,n})$. It remains to prove that:
$$
\lim_{\varepsilon\rightarrow 0}\frac{\log(1-\varepsilon)}{D_{\varepsilon}}=0.
$$
Notice that:
$$
D_{\varepsilon} \geq -\varepsilon\log\varepsilon.
$$
Then we have:
$$
0 \leq |\frac{\log(1-\varepsilon)}{D_{\varepsilon}}|\leq \frac{-\log(1-\varepsilon)}{-\varepsilon\log\varepsilon}.
$$
So according to Taylor expansion:
$$
\lim_{\varepsilon\rightarrow 0}\frac{-\log(1-\varepsilon)}{-\varepsilon\log\varepsilon} = \lim_{\varepsilon\rightarrow 0} \frac{\varepsilon+O(\varepsilon^2)}{\varepsilon\log(1/\varepsilon)}=\lim_{\varepsilon\rightarrow 0} \frac{1+O(\varepsilon)}{\log(1/\varepsilon)} = 0.
$$
Thus, by the squeeze theorem,
\[
\left|
\frac{\log(1-\varepsilon)}{D_\varepsilon}
\right|
\to0.
\]
Therefore,
\[
\frac{\log(1-\varepsilon)}{D_\varepsilon}\to0.
\]
\end{proof}
\textbf{Theorem 1:} Let $\mathcal{T}_{t}$ denote the $\ell_2$-norm of the gradient of the log-probability $\log \pi_{\theta}(o_t \mid \bd{q}, \bd{o}_{<t})$ with respect to the logits $\bd{l}_t$, and let $\mathcal{H}_t$ denote the entropy of the distribution
$\pi_{\theta}(\cdot \mid \bd{q}, \bd{o}_{<t})$. Suppose the generated token $o_t$ corresponds to the $n$-th vocabulary entry, i.e., $\pi_{\theta}(o_t \mid \bd{q}, \bd{o}_{<t}) = p_{t,n}$. Consider a small perturbation $l_{t,n} \rightarrow l_{t,n} + \Delta l_{t,n}$ and $\mathcal{H}_t>0$, $\mathrm{RSI}_t\neq 0 $. Then both $\log \mathcal{T}_t$ and $\log \mathcal{H}_t$ admit first-order variations with respect to $l_{t,n}$, we have:
$$
\frac{d\log \mathcal{T}_{t}}{d\log \mathcal{H}_t} := \frac{\frac{d \log \mathcal{T}_t}{d l_{t,n}}}
{\frac{d \log \mathcal{H}_t}{d l_{t,n}}} = \frac{1}{\mathrm{RSI}_t}.
$$

\begin{proof}
First, by the softmax definition, for any $i$,
$$
p_{t,i} = \frac{e^{l_{t,i}}}{\sum_{j=1}^V{e^{l_{t,j}}}},
$$
Therefore,
$$\frac{\partial p_{t,i}}{\partial l_{t,j}} = -p_{t,j}p_{t,i} (j\neq i),  \frac{\partial p_{t,i}}{\partial l_{t,i}} = p_{t,i}(1-p_{t,i}).
$$
According to $\pi_{\theta}(\cdot|\bd{q}, \bd{o}_{<t}) = [p_{t,1},p_{t,2},...,p_{t,V}]$, we have:
\begin{align*}
\nabla_{\bd{l}_t} \log(p_{t,i}) = \frac{\nabla_{\bd{l}_t} p_{t,i}}{p_{t,i}} & =\frac{1}{p_{t,i}}[p_{t,i}(e_{t,i}-\pi_{\theta}(\cdot|\bd{q}, \bd{o}_{<t}))] \\& = e_{t,i}-\pi_{\theta}(\cdot|\bd{q}, \bd{o}_{<t}).
\end{align*}
where $e_{t,i}$ is the unit vector at step $t$ whose i-th entry is 1 and the others are 0. 

\noindent According to the definition of $\mathcal{T}_{t}$:
$$
\mathcal{T}_{t} = ||\nabla_{\bd{l}_t} \log(p_{t,n})||_2 = \sqrt{(1-p_{t,n})^2 + \sum_{j \neq n }p_{t,j}^2}.
$$
Consider the total differentials of $\mathcal{H}_{t}$ and $\mathcal{T}_{t}$ with respect to $\bd{l}_t$, consider perturbing $\bd{l}_t$ by $\bd{l}_t +\Delta \bd{l}_t$ where $\Delta \bd{l}_t = [\Delta l_{t,1}, \Delta l_{t,2}, \dots, \Delta l_{t,V}]$. Then we have:
$$
d(\log \mathcal{T}_{t}) = \nabla_{\bd{l}_t} \log \mathcal{T}_{t} \cdot \Delta \bd{l}_t = \sum_{k=1}^{V} \frac{\partial \log \mathcal{T}_{t}}{\partial l_{t,k}} \Delta l_{t,k},
$$
$$
d(\log \mathcal{H}_{t}) = \nabla_{\bd{l}_t} \log \mathcal{H}_{t} \cdot \Delta \bd{l}_t = \sum_{k=1}^{V} \frac{\partial \log \mathcal{H}_{t}}{\partial l_{t,k}} \Delta l_{t,k}.
$$
We first compute $\frac{\partial \mathcal{H}_{t}}{\partial l_{t,i}}$ first, we have:
$$
\frac{\partial \mathcal{H}_{t}}{\partial l_{t,i}} = \sum_k \frac{\partial \mathcal{H}_{t}}{\partial p_{t,k}} \cdot \frac{\partial p_{t,k}}{\partial l_{t,i}},
$$
and
\begin{align*}
\frac{\partial \mathcal{H}_{t}}{\partial p_{t,k}} = \frac{\partial (-p_{t,k} \log p_{t,k})}{\partial p_{t,k}} & = -(\log p_{t,k} + p_{t,k} \cdot \frac{1}{p_{t,k}}) \\ &= -(\log p_{t,k} + 1).
\end{align*}
Therefore:
\begin{align*}
&\frac{\partial \mathcal{H}_{t}}{\partial l_{t,i}}  = \sum_k \left[ \frac{\partial \mathcal{H}_{t}}{\partial p_{t,k}}  \cdot \frac{\partial p_{t,k}}{\partial l_{t,i}} \right]  \\ &= \sum_k \left[ -(\log p_{t,k} + 1) \cdot p_{t,k}(\delta^t_{ki} - p_{t,i}) \right] \\ & = \sum_k \left[ -p_{t,k} \log p_{t,k} (\delta^t_{ki} - p_{t,i}) \right] + \sum_k \left[ -p_{t,k} (\delta^t_{ki} - p_{t,i}) \right].
\end{align*}
Where:
$$
\delta^t_{ki}=
\begin{cases} 
1, & \text{if } i=k , \\
0, & \text{if } i \neq k.
\end{cases}
$$
Notice that:
$$
\sum_k -p_{t,k} \delta^t_{ki} + \sum_k p_{t,k} p_{t,i} = -p_{t,i} + p_{t,i} (\sum_k p_{t,k}) = 0.
$$
So:
\begin{align*}
\frac{\partial \mathcal{H}_{t}}{\partial l_{t,i}} & = \sum_k -p_{t,k} \log p_{t,k} \delta^t_{ki} - \sum_k (-p_{t,k} \log p_{t,k}) p_{t,i} \\ &= -p_{t,i} (\log p_{t,i} + \mathcal{H}_{t}) \\ & = -p_{t,i}\mathcal{H}_{t}(1+\frac{\log(p_{t,i})}{\mathcal{H}_{t}}).
\end{align*}
and
\begin{align*}
\sum_{k=1}^{V} \frac{\partial \mathcal{H}_{t}}{\partial l_{t,k}} \Delta l_{t,k}= -\sum_{k=1}^{V} p_{t,k} (\log p_{t,k} + \mathcal{H}_{t})\Delta l_{t,k}.
\end{align*}
On the other hand, we compute the derivative of $\mathcal T_t$ with respect
to each logit $l_{t,i}$. Since
\[
\mathcal T_t^2
=
\|\mathbf e_n-\mathbf p_t\|_2^2
=
(1-p_{t,n})^2+\sum_{k\neq n}p_{t,k}^2
=
1-2p_{t,n}+\sum_{k=1}^V p_{t,k}^2,
\]
we have
\[
\frac{\partial \mathcal T_t^2}{\partial l_{t,i}}
=
-2\frac{\partial p_{t,n}}{\partial l_{t,i}}
+
2\sum_{k=1}^V
p_{t,k}
\frac{\partial p_{t,k}}{\partial l_{t,i}}.
\]
Using the softmax derivative
\[
\frac{\partial p_{t,k}}{\partial l_{t,i}}
=
p_{t,k}(\delta_{ki}^t-p_{t,i}),
\]
we obtain
\begin{align*}
\frac{\partial \mathcal T_t^2}{\partial l_{t,i}}
&=
-2p_{t,n}(\delta_{ni}^t-p_{t,i})
+
2\sum_{k=1}^V p_{t,k}^2(\delta_{ki}^t-p_{t,i}) \\
&=
-2p_{t,n}\delta_{ni}^t
+
2p_{t,n}p_{t,i}
+
2p_{t,i}^2
-
2p_{t,i}\sum_{k=1}^V p_{t,k}^2.
\end{align*}
Since $p_{t,n}\delta_{ni}^t=p_{t,i}\delta_{ni}^t$, this can be rewritten as
\[
\frac{\partial \mathcal T_t^2}{\partial l_{t,i}}
=
-2p_{t,i}
\left(
\delta_{ni}^t
-
p_{t,n}
-
p_{t,i}
+
\sum_{k=1}^V p_{t,k}^2
\right).
\]
Moreover,
\[
\frac{\partial \mathcal T_t^2}{\partial l_{t,i}}
=
2\mathcal T_t
\frac{\partial \mathcal T_t}{\partial l_{t,i}}.
\]
Therefore,
\[
\frac{\partial \mathcal T_t}{\partial l_{t,i}}
=
-\frac{p_{t,i}}{\mathcal T_t}
\left(
\delta_{ni}^t
-
p_{t,n}
-
p_{t,i}
+
\sum_{k=1}^V p_{t,k}^2
\right).
\]
Using
\[
\sum_{k=1}^V p_{t,k}^2
=
\mathcal T_t^2+2p_{t,n}-1,
\]
we further obtain
\[
\frac{\partial \mathcal T_t}{\partial l_{t,i}}
=
-\frac{p_{t,i}}{\mathcal T_t}
\left(
\delta_{ni}^t
-
p_{t,i}
+
\mathcal T_t^2
+
p_{t,n}
-
1
\right).
\]
Equivalently,
\[
\frac{\partial \mathcal T_t}{\partial l_{t,i}}
=
\begin{cases}
-\mathcal T_t p_{t,n},
& i=n,\\[4pt]
-\mathcal T_t p_{t,i}
+
\dfrac{p_{t,i}^2-p_{t,i}p_{t,n}+p_{t,i}}{\mathcal T_t},
& i\neq n.
\end{cases}
\]
Since $\mathcal H_t>0$, the distribution is not degenerate on $o_t$;
therefore $\mathcal T_t>0$, and the division by $\mathcal T_t$
is well-defined.

\noindent Using the expression derived above,
\[
\frac{\partial \mathcal T_t}{\partial l_{t,k}}
=
-\mathcal T_t p_{t,k}
-
\frac{p_{t,k}}{\mathcal T_t}
\left(
\delta_{nk}-p_{t,k}+p_{t,n}-1
\right),
\]
we obtain
\begin{align*} 
& \sum_{k=1}^V
\frac{\partial \mathcal T_t}{\partial l_{t,k}}
\Delta l_{t,k} 
\\&=
\sum_{k=1}^V
\left[
-\mathcal T_t p_{t,k}
-
\frac{p_{t,k}}{\mathcal T_t}
\left(
\delta_{nk}^t-p_{t,k}+p_{t,n}-1
\right)
\right]
\Delta l_{t,k} \\
&=
-\mathcal T_t
\sum_{k=1}^V p_{t,k}\Delta l_{t,k}
-
\frac{1}{\mathcal T_t}
\sum_{k=1}^V
p_{t,k}
\left(
\delta_{nk}^t-p_{t,k}+p_{t,n}-1
\right)
\Delta l_{t,k}.
\end{align*}
The second summation can be simplified as follows. For $k=n$,
\[
\delta_{nk}^t-p_{t,k}+p_{t,n}-1=0.
\]
For $k\neq n$,
\[
\delta_{nk}-p_{t,k}+p_{t,n}-1
=
-\left(p_{t,k}-p_{t,n}+1\right).
\]
Therefore,
\begin{align*}
& \sum_{k=1}^V
\frac{\partial \mathcal T_t}{\partial l_{t,k}}
\Delta l_{t,k} \\ &=
-\mathcal T_t
\sum_{k=1}^V p_{t,k}\Delta l_{t,k}
+
\frac{1}{\mathcal T_t}
\sum_{k\neq n}
p_{t,k}
\left(
p_{t,k}-p_{t,n}+1
\right)
\Delta l_{t,k}.
\end{align*}
So, in general, we have:
\begin{align*}
& \frac{d(\log \mathcal{T}_{t})}{d(\log \mathcal{H}_{t})}
= \frac{\mathcal{H}_{t}}{\mathcal{T}_{t}}
\frac{\sum_{k=1}^{V} \frac{\partial \mathcal{T}_{t}}{\partial l_{t,k}} \Delta l_{t,k}}
{\sum_{k=1}^{V} \frac{\partial \mathcal{H}_{t}}{\partial l_{t,k}} \Delta l_{t,k}}  \\
&= \frac{\mathcal{H}_{t}}{\mathcal{T}_{t}}
\frac{
\begin{aligned}
&\mathcal{T}_{t}\sum_{k=1}^{V} p_{t,k} \Delta l_{t,k}
-\frac{1}{\mathcal{T}_{t}}\sum_{k\neq n} p_{t,k}\\
&\quad \cdot (p_{t,k}-p_{t,n}+1) \Delta l_{t,k}
\end{aligned}}
{\sum_{k=1}^{V} p_{t,k} (\log p_{t,k} + \mathcal{H}_{t})\Delta l_{t,k}} \\
&=
\frac{
\begin{aligned}
&\sum_{k=1}^{V} p_{t,k} \Delta l_{t,k}
-\frac{1}{\mathcal{T}_{t}^2}\sum_{k\neq n} p_{t,k}\\
&\quad \cdot (p_{t,k}-p_{t,n}+1) \Delta l_{t,k}
\end{aligned}}
{\begin{aligned}
&\sum_{k=1}^{V} p_{t,k} \Delta l_{t,k}
+\frac{1}{\mathcal{H}_{t}}\sum_{k=1}^{V} p_{t,k}\\
&\quad \cdot (\log p_{t,k})\Delta l_{t,k}
\end{aligned}}.
\end{align*}
According to our assumption, if we only consider a small amount on $\Delta l_{t,n}$, then we obtain:
\begin{align*}
    \frac{d(\log \mathcal{T}_{t})}{d(\log \mathcal{H}_{t})} & = \frac{\mathcal{H}_{t}}{\mathcal{T}_{t}} \frac{\mathcal{T}_{t}p_{t,n}\Delta l_{t,n}}{p_{t,n}(\log p_{t,n}+\mathcal{H}_{t})\Delta l_{t,n}}=\frac{\mathcal{H}_{t}}{\mathcal{H}_{t}+\log p_{t,n}} \\ & =\frac{1}{1+\frac{\log \pi_{\theta}(o_t|\bd{q}, \bd{o}_{<t})}{\mathcal{H}_{t}}} = \frac{1}{\text{RSI}_t}.
\end{align*}
\end{proof}

\end{document}